\renewcommand{\appendixsectionformat}[2]{B} 
\newtheorem{definition}{Definition}
\newcommand{\sota}{state-of-the-art}
\newcommand{\thanksnote}[1]{{%
  \let\thempfn\relax
  \footnotetext[1]{#1}
}}
\newcommand{\methodname}{C3DPO}
\newcommand{\rdrstyle}{metallic}
\renewcommand{\paragraph}{%
  \@startsection{paragraph}{4}%
  {\z@}{0.5em}{-1em}%
  {\normalfont\normalsize\bfseries}%
}
\def\smallunderbrace#1{\mathop{\vtop{\m@th\ialign{##\crcr
   $\hfil\displaystyle{#1}\hfil$\crcr
   \noalign{\kern3\p@\nointerlineskip}%
   \tiny\upbracefill\crcr\noalign{\kern3\p@}}}}\limits}
\ificcvfinal\pagestyle{empty}\fi
\newcommand{\figmargin}{-0.2cm}
\newcommand{\tabmargin}{-0.2cm}
\title{C3DPO: Canonical 3D Pose Networks for Non-Rigid Structure From Motion}
\begin{document}

\twocolumn[{%
\renewcommand\twocolumn[1][]{#1}%
\newpage
\null
\begin{center}
\vskip .375in
{\Large \bf
C3DPO: Canonical 3D Pose Networks for Non-Rigid Structure From Motion
\par}
\vspace*{24pt}
{
\large
\lineskip .5em
David Novotny$^{\ast}$~~~~%
Nikhila Ravi$^{\ast}$~~~~%
Benjamin Graham~~~~%
Natalia Neverova~~~~%
Andrea Vedaldi
\vspace{0.1cm} \\
{\tt\small \{dnovotny,nikhilar,benjamingraham,nneverova,vedaldi\}@fb.com}
\vspace{0.4cm} \\
{Facebook AI Research}
\vspace{0.2cm}
\par
}
\vspace{0.53cm}
\includegraphics[width=0.95\linewidth]{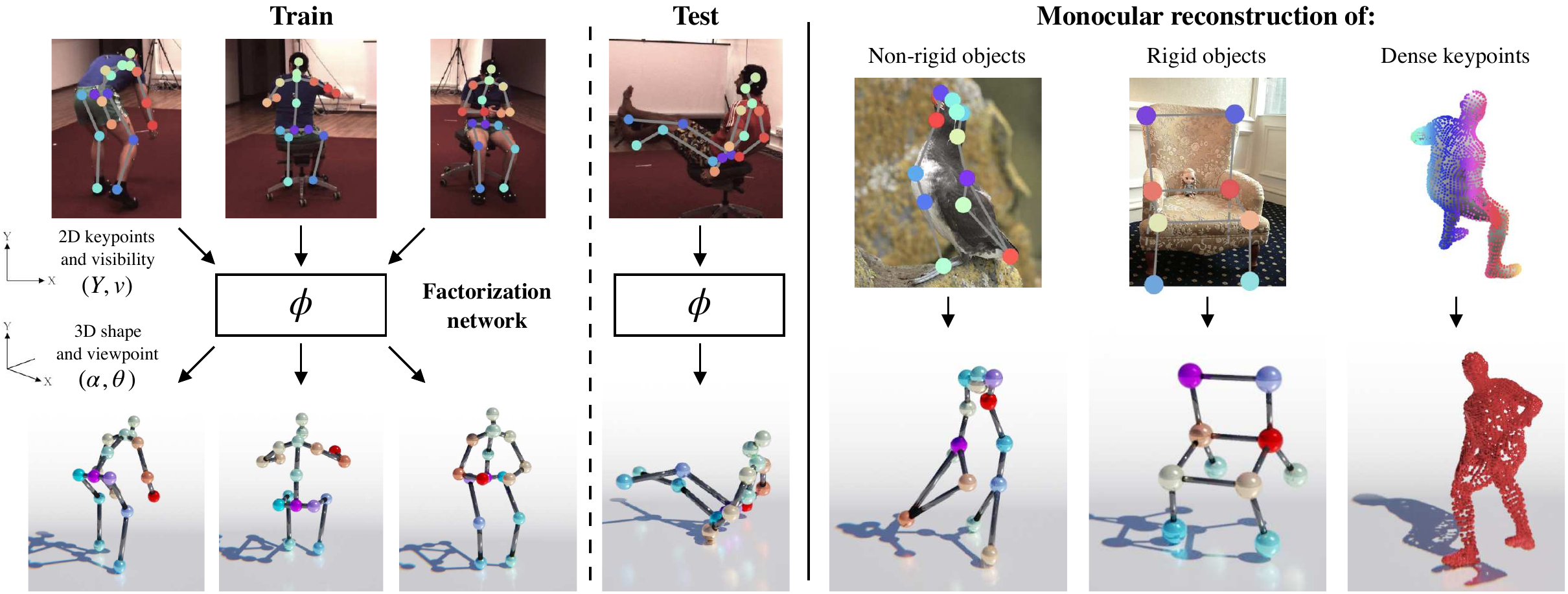}
\captionof{figure}{Our method learns a 3D model of a deformable object category from 2D keypoints in unconstrained images. It comprises a deep network that learns to factorize shape and viewpoint and, at test time, performs monocular reconstruction. \label{f:splash}} 
\end{center}
}
\vspace{0.3cm}
]

\begin{abstract} 
  \vspace{-0.3cm}
  We propose \methodname, a method for extracting 3D models of deformable objects from 2D keypoint annotations in unconstrained images. We do so by learning a deep network that reconstructs a 3D object from a single view at a time, accounting for partial occlusions, and explicitly factoring the effects of viewpoint changes and object deformations. In order to achieve this factorization, we introduce a novel regularization technique. We first show that the factorization is successful if, and only if, there exists a certain canonicalization function of the reconstructed shapes. Then, we learn the canonicalization function together with the reconstruction one, which constrains the result to be consistent. We demonstrate state-of-the-art reconstruction results for methods that do not use ground-truth 3D supervision for a number of benchmarks, including Up3D and PASCAL3D+.
Source code has been made available at
\url{https://github.com/facebookresearch/c3dpo_nrsfm}.
\end{abstract}
\afterpage{\thanksnote{$^{\ast}$Authors contributed equally.}}
\section{Introduction}\label{s:intro}

3D reconstruction of static scenes is mature, but the problem remains challenging when objects can deform due to articulation and intra-class variations.
In some cases, deformations can be avoided by capturing multiple simultaneous images of the object.
However, this requires expensive hardware comprising several imaging sensors and only provides instantaneous 3D reconstructions of the objects without modelling their deformations.
Extracting deformation models requires establishing correspondences between the instantaneous 3D reconstructions, which is often done by means of physical markers.
Modern systems such as the Panoptic Studio~\cite{joo2015panoptic} can align 3D reconstructions without markers, but require complex specialized hardware, making them unsuitable for use outside a specialized laboratory.

In this paper, we thus consider the problem of reconstructing and modelling 3D deformable objects given only unconstrained monocular views and keypoint annotations.
Traditionally, this problem has been regarded as a generalization of static scene reconstruction, and approached by extending Structure from Motion (SFM) techniques.
Due to their legacy, such Non-Rigid SFM (NR-SFM) methods have often focused on the geometric aspects of the problem, but the quality of the reconstructions also depends on the ability to \emph{model statistically} the object shapes and deformations.

We argue that modern deep learning techniques may be used in NR-SFM to capture much better statistical models of the data than the simple low-rank constraints employed in traditional approaches.
We thus propose a method that reconstructs the object in 3D while learning a deep network that models it.
This network is inspired by recent approaches~\cite{kudo2018unsupervised, kanazawa2018learning, pavlakos2018learning, drover20183dpose, kar2015category} that accurately lift 2D keypoints to 3D given a single view of the object.
The difference is that our network does not require 3D information for supervision, but is instead trained jointly with 3D reconstruction from 2D keypoints.

Our model, named \methodname, has two important innovations. First, it performs 3D reconstruction by \emph{factoring} the effects of viewpoint changes and object deformations. Hence, it reconstructs the 3D object in a canonical frame that registers the overall 3D rigid motion and leaves as residual variability only the motion ``internal'' to the object.

However, achieving this factorization correctly is non-trivial, as noted extensively in the NR-SFM literature~\cite{xiao2004dense}.
Our second innovation is a solution to this problem.
We observe that, if two 3D reconstructions overlap up to a rigid motion, they must coincide (since the reconstruction network should remove the effect of a rigid motion).
Hence, any class of 3D shapes equivalent up to a rigid motion must contain at most one canonical reconstruction.
If so, there exits a ``canonicalization'' function that maps elements in each equivalent class to this canonical reconstruction.
We exploit this fact by learning, together with the reconstruction network, a second network that performs this canonicalization, which regularizes the solution.

Empirically, we show that these innovations lead to a very effective and robust approach to non-rigid reconstruction and modelling of deformable 3D objects from unconstrained 2D keypoint data.
We compare \methodname~against several traditional NR-SFM baselines as well as other approaches that use deep learning \cite{kanazawa2018learning,kudo2018unsupervised}.
We test on a number of benchmarks, including Human3.6M, PASCAL3D+, and Synthetic Up3D, showing superior results for methods that make no use of ground-truth 3D information.

\section{Related work}\label{s:related}


There are several lines of work which address the problem of 3D shape and viewpoint recovery of a deforming object from 2D observations. This section covers relevant work in NR-SFM and recent deep-learning based methods.

\paragraph{NR-SFM.}

There are several solutions to the NR-SFM problem which can recover the viewpoint and 3D shape of a deforming object from 2D keypoints across multiple frames~\cite{akhter2009nonrigid, bregler2000recovering, akhter2011trajectory, dai2014simple}, the majority of which are based on Bregler's factorization framework \cite{bregler2000recovering}.
However the NR-SFM problem is severely under constrained as both the camera and 3D object are moving along with the object deforming. This poses a challenge in correctly factoring the viewpoint and shape~\cite{xiao2004dense}, and additional problems with missing values in the observations.
Priors about the shape and the camera motion are employed to improve conditioning of the problem, including the use of low-rank subspaces in the spatial domain ~\cite{agudo2018image, fragkiadaki2014grouping, dai2014simple, zhu2014complex}, temporal domain, for example, fitting 2D keypoint trajectories to a set of predefined DCT basis functions~\cite{akhter2009nonrigid, akhter2011trajectory}, spatio-temporal domain ~\cite{agudo2017dust, gotardo2011non, kumar2018scalable, kumar2017spatial}, multiple unions of low-rank subspaces~\cite{zhu2014complex, agudo2018deformable}, learning an overcomplete dictionary of basis shapes from 3D motion capture data and imposing an L1 penalty on basis coefficients~\cite{zhou2016sparseness, zhou2016sparse} and imposing Gaussian priors on the shape coefficients ~\cite{torresani2008nonrigid}. 

Many of these approaches however, as we have empirically verified, are not scalable and can only reliably reconstruct datasets of few thousands of images and hundreds of keypoints. Furthermore, many of them require keypoint correspondences for the \emph{same} instance from \emph{multiple} images from a monocular view or from multi-view cameras. 
Finally, in contrast to our method, using the listed approaches it is difficult or computationally expensive to reconstruct new test samples after training on a fixed collection of training shapes.

\begin{figure*}[ht!]
\centering\includegraphics[width=\textwidth,trim={0cm 4.3cm 0cm 0cm},clip]{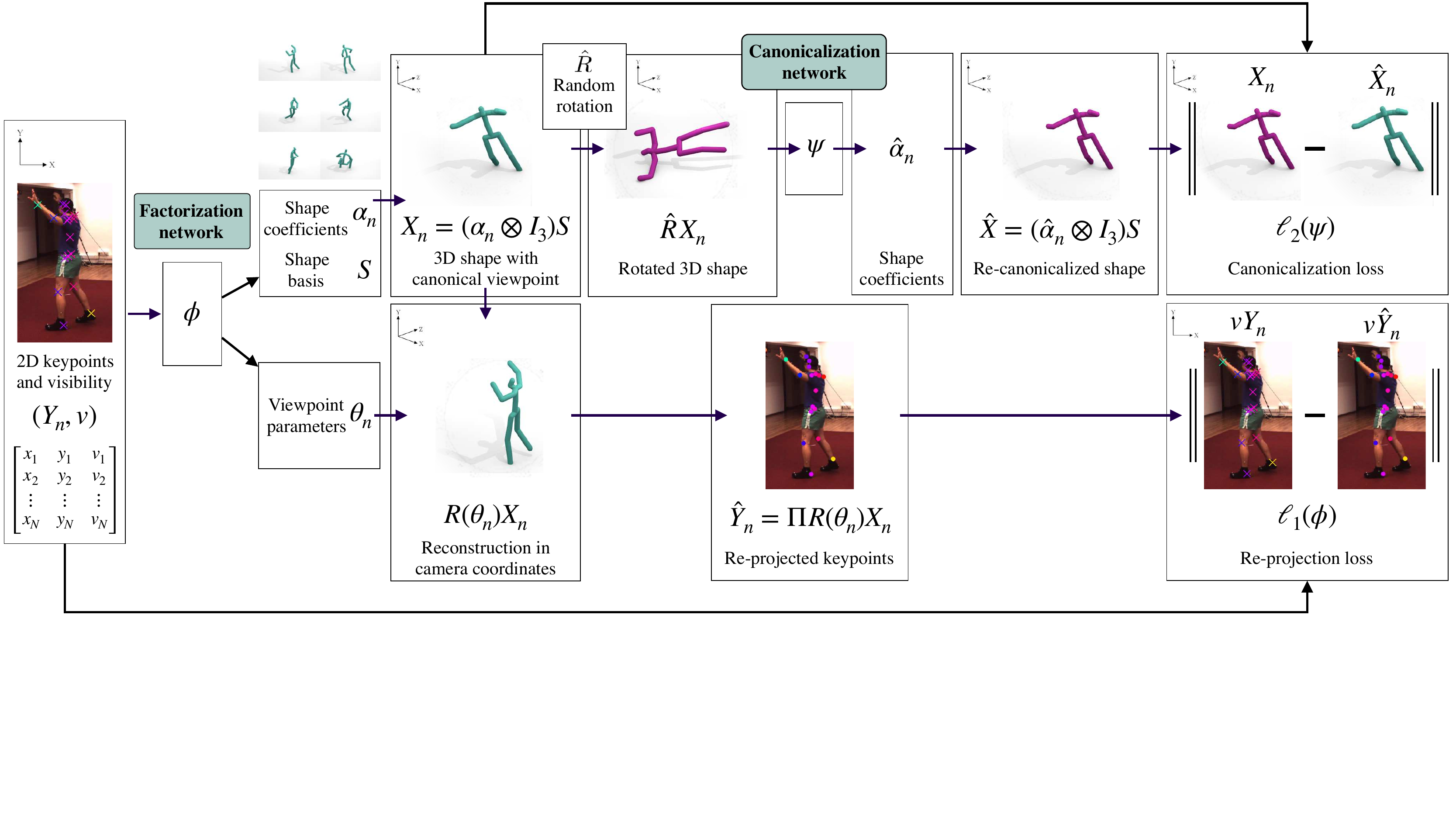}\vspace{-0.4cm}
\caption{\textbf{An overview of \methodname.} The lower branch learns monocular 3D reconstruction by minimizing the re-projection error $\ell_1$. The upper branch learns to factorize viewpoints and internal deformations by the means of the canonicalization loss.}\label{f:overview}
\vspace{-0.3cm}
\end{figure*}

\paragraph{Category specific 3D shapes.}
Also related are methods that reconstruct shapes of a visual object category, such as cars or birds. \cite{cashman2013shape} is an early work that learns a morphable model of dolphins from 2D keypoints and segmentation masks. Using similar supervision, Vicente \etal \cite{vicente2014reconstructing,carreira2015virtual} reconstruct the categories of PASCAL VOC. An important part of the pipeline is an initial SFM algorithm which returns a mean shape and camera matrices of each object category. Similarly, Kar \etal \cite{kar2015category} utilize an NR-SFM method for reconstructing categories from PASCAL3D+. \cite{novotny17learning} proposed the first purely image-driven method for single-view reconstruction of rigid object categories. Most recently, Kanazawa \etal \cite{kanazawa2018learning} train a deep network capable of learning both shape and texture of deformable objects. The commonality among the aforementioned methods is their reliance on the initial SFM/NR-SFM step which can often fail. Our method overcomes this problem by learning a monocular shape predictor in a single step without any additional, potentially unreliable, preprocessing steps. 

\paragraph{Weakly supervised 3D human pose estimation.}
Our approach is related to weakly supervised methods that lift 2D human skeleton keypoints to 3D given a single input view. Besides the fully supervised methods \cite{martinez2017simple,moreno20173d}, several works have explored multi-view supervision \cite{kocabas2019self,pavlakos2017harvesting,rhodin2018learning}, ordinal depth supervision \cite{pavlakos2018ordinal}, unpaired 2D-3D data \cite{pavlakos2018learning,tung2017adversarial,zhou2016sparseness,kanazawa2018end} or videos \cite{kanazawa2019learning} to alleviate the need for full 2D-3D annotations. While these auxiliary sources of supervision allow for compelling 3D predictions, in this work we use only inexpensive 2D keypoint labels.

Closer to our supervisory scheme, \cite{kudo2018unsupervised,drover20183dpose} recently proposed a method that rotates the 3D-lifted keypoints into new views and validates the resulting projections with an adversarial network that learns the distribution of plausible 2D poses.
However, both methods require \emph{all} keypoints to be visible in every frame. This restricts their use to `multi-view' datasets such as Human3.6M. In addition to the 2D keypoints, \cite{drover20183dpose} use the intrinsic camera parameters, and 3D ground truth data to generate new synthetic 2D views, which leads to substantially better quantitative results at the cost of a greater level of supervision.

To conclude, our contribution differs from prior work as it 1) recovers both 3D canonical shape and viewpoint using only 2D keypoints in a single image at test time, 2) uses a novel self-supervised constraint to correctly factorize 3D shape and viewpoint, 3) can handle occlusions and missing values in the observations, 4) works effectively across multiple object categories.


\section{Method}\label{s:method}

We start by summarizing some background facts about SFM and NR-SFM and then we introduce our method.

\subsection{Structure from motion}\label{s:sfm}

The input to \emph{structure from motion} (SFM) are tuples $y_n=(y_{n1},\dots,y_{nK}) \in \mathbb{R}^{2\times K}$ of 2D keypoints, representing $N$ \emph{views} $y_1,\dots,y_n$ of a rigid object.
The views are generated from a single tuple of 3D points $X =(X_1,\dots,X_K)\in \mathbb{R}^{3\times K}$, called the \emph{structure}, and $N$ \emph{rigid motions} $(R_n,T_n) \in SO(3) \times T(3)$.
The views, the structure, and the motions are related by equations
$
  y_{nk} = \Pi(R_n X_k + T_n)
$
where $\Pi:\mathbb{R}^3\rightarrow\mathbb{R}^2$ is the \emph{camera projection} function.
For simplicity of exposition we consider an orthographic camera.
In this case, the projection function is linear and given by matrix
$
\Pi = \begin{bmatrix} I_2 & 0 \end{bmatrix}
$
where $I_2 \in \mathbb{R}^{2\times 2}$ is the 2D identity matrix and the projection equation
$
 y_{nk} = \Pi R_n X_k + \Pi T_n
$
is also linear.
If all keypoints are visible, they can be centered together with the structure, eliminating the translation from this equation (details in the supplementary material). This yields the simplified system of equations
$
y_{nk} = M_n X_k,
$
where
$
M_n = \Pi R_n
$
are the camera view matrices, or \emph{viewpoints}.
The equations can be written in matrix form as
\begin{equation}\label{e:sfm}
\setlength\arraycolsep{2pt}
Y
=
\begin{bmatrix}
y_{11} & \dots & y_{1K} \\
\vdots & \ddots &  \vdots\\
y_{N1} & \dots & y_{NK} \\
\end{bmatrix}\!\!,
\ %
M
=
\begin{bmatrix}
M_1\\\vdots\\ M_N
\end{bmatrix}\!\!,
\smallunderbrace{Y}_{2N\times  K}
\!\!\!=\!
\smallunderbrace{M}_{2N\times 3} \smallunderbrace{X}_{3\times K}.
\end{equation}
Hence, SFM can be formulated as factoring the views $Y$ into viewpoints $M$ and structure $X$.
This factorization is not unique, resulting in a mild reconstruction ambiguity, as discussed in supplementary material.

\subsection{Non-rigid structure from motion}\label{s:nrsfm}

The \emph{non-rigid} SFM (NR-SFM) problem is similar to the SFM problem, except that the structure $X_n$ is allowed to deform from one view to the next.
Obtaining a non-trivial solution is only possible if such deformations are constrained in some manner.
The simplest constraint is a linear model $X_n = X(\alpha_n;S)$, expressing the structure $X_n$ as a small vector of view-specific \emph{pose} parameters $\alpha_{n} \in \mathbb{R}^D$ and a view-invariant \emph{shape basis} $S\in\mathbb{R}^{3D \times K}$:
\begin{equation}\label{e:gen}
X(\alpha_n;S) = (\alpha_n \otimes I_3) S
\end{equation}
where $\alpha_n$ is a row vector and $\otimes$ is the Kronecker product.
We can expand the equation for individual points as
$
X_{nk} = \sum_{d=1}^D \alpha_{nd} S_{dk}
$
where $S_{dk}\in\mathbb{R}^3$ is a shorthand for the subvector $S_{3d-2:3d,k}$.
We can also extend it to all points and poses as
$
X = (\alpha \otimes I_3) S \in \mathbb{R}^{3N \times K}
$
where $\alpha \in\mathbb{R}^{N\times D}$ encodes a pose per row.

Given multiple views of the points, the goal of NR-SFM is to recover the views, the poses, and the shape basis from observations
$
y_{nk} = \Pi(R_n \sum_{d=1}^D \alpha_{nd} S_{dk} + T_n).
$
As in SFM, for orthographic projection the translation can be removed from the equation by centering, and NR-SFM can be expressed as a multi-linear matrix factorization problem:
\begin{equation}\label{e:nrsfm}
 \smallunderbrace{Y}_{2N\times K} =
 \smallunderbrace{\bar M}_{2N\times 3N\text{\ (sparse)}}
 (\smallunderbrace{\alpha}_{N\times D} \!\!\!\otimes\ I_3)\smallunderbrace{S}_{3D\times K},
\end{equation}
where the $N$ camera view matrices are contained in the block-diagonal matrix
$
\bar M = \operatorname{diag}(M_1,\dots,M_N).
$
Like SFM, this factorization has mild ambiguities, discussed in the supplementary material.

\subsection{Monocular motion and structure estimation}

Once the shape basis $S$ is learned, model~\eqref{e:nrsfm} can be used to reconstruct viewpoint and pose given a single view $Y$ of the object, yielding  monocular reconstruction.
However, this still requires solving a matrix factorization problem.

For \methodname, we propose to instead \emph{learn} a mapping $\Phi$ that performs this factorization in a feed-forward manner, recovering the view matrix $M$ and the pose parameters $\alpha$ from the keypoints $Y$:
$$
\Phi :
\mathbb{R}^{2K} \times \{0,1\}^K
\rightarrow \mathbb{R}^D \times \mathbb{R}^{3},
\quad
(Y, v) \mapsto (\alpha,\theta).
$$
Here, $v$ is a (row) vector of boolean flags denoting whether a keypoint is visible in that particular view or not (if the keypoint is not visible, the flag as well as the spatial coordinates of the point are set to zero).
The function outputs the $D$ pose parameters $\alpha$ and the three parameters $\theta\in\mathbb{R}^3$ of the camera view matrix $M(\theta)= \Pi R(\theta)$, where the rotation is given by
$
R(\theta) = \operatorname{expm} [\theta]_\times
$,
$\operatorname{expm}$ is the matrix exponential and $[\cdot]_\times$ is the hat operator.

The benefit of using a learned mapping, besides speed, is the fact that it can embody prior information on the structure of the object which is not apparent in the linear model.
The mapping itself is learned by minimizing the \emph{re-projection loss} obtained by averaging the loss over visible keypoints:
\begin{equation}\label{e:basicloss}
  \ell_1(Y,v;\Phi,S)
  =
  \frac{1}{K}
  \sum_{k=1}^K
  v_k
  \cdot
  \| Y_k - M(\theta) (\alpha \otimes I_{3}) S_{:,k} \|_\epsilon,
\end{equation}
where $(\alpha, \theta) = \Phi(Y, v)$ and $\| z \|_\epsilon = (\sqrt{1+(\|z\|/\epsilon})^2-1) \epsilon$ is the pseudo-huber loss with soft threshold $\epsilon$\footnote{We set $\epsilon=0.01$ in all experiments.}.
Given a dataset $(Y,v)\in\mathcal{D}$ of views of an object category, the neural network $\Phi$ is trained by minimizing the empirical average of this loss.
This setup is illustrated in the bottom half of~\cref{f:overview}.

\subsection{Consistent factorization via canonicalization}\label{s:canonicalization}

\begin{figure}[t]
\newcommand{\eqimw}{1.3cm}

\newcommand{\eqbox}[1] {\scalebox{1}[-1]{\adjustbox{width=\eqimw,trim={.35\width} {.30\height} {.35\width}  {.30\height},clip}{\includegraphics[width=\linewidth]{#1}}}}
\newcommand{\eqboxx}[1]{\scalebox{1}[-1]{\adjustbox{width=\eqimw,trim={.3\width} {.05\height} {.3\width} {.47\height},clip}{\includegraphics[width=\linewidth]{#1}}}}

\small
\begin{center}
Input 2D keypoint locations $Y$: \\ \vspace{-0.3cm}
\eqbox{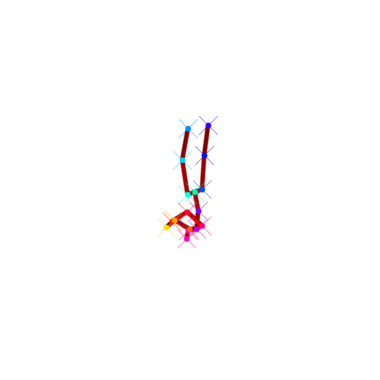}
\eqbox{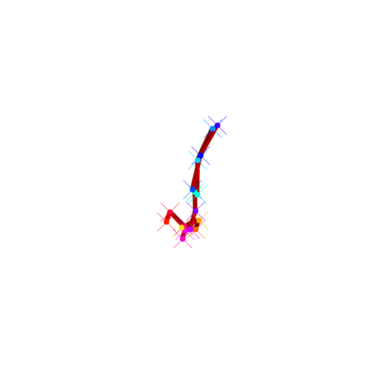}
\eqbox{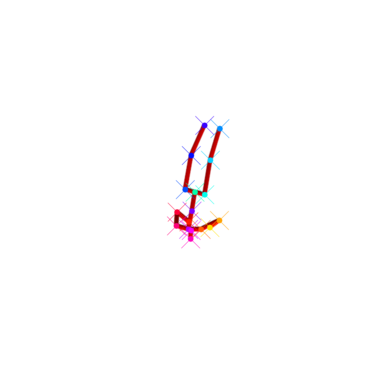}
\eqbox{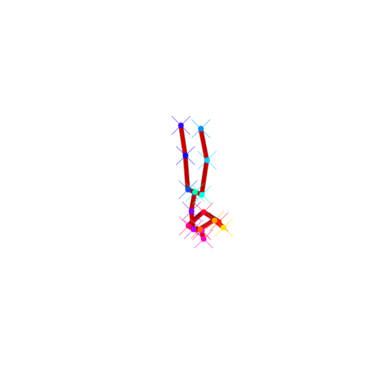}
\eqbox{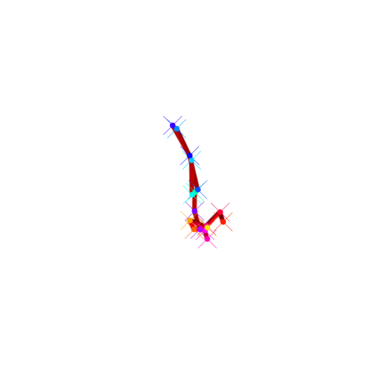}
\eqbox{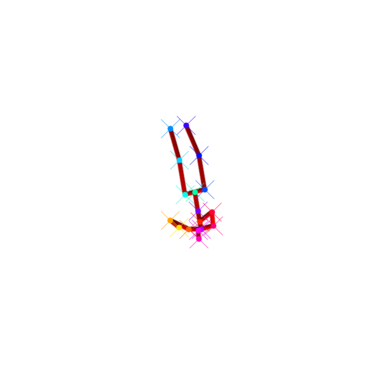}\\
\vspace{0.2cm}
Predicted canonical shape $X = \Phi(Y)$ trained \textbf{with} $\Psi$: \\ \vspace{-0.3cm}
\eqboxx{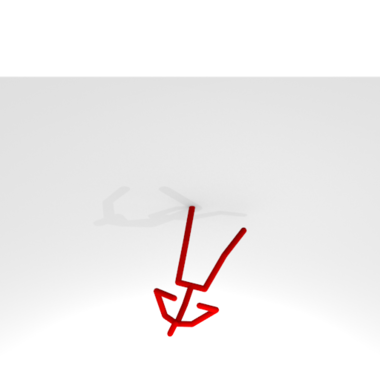}
\eqboxx{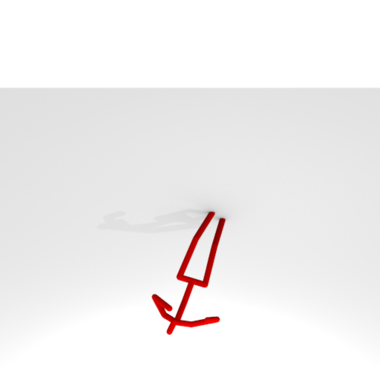}
\eqboxx{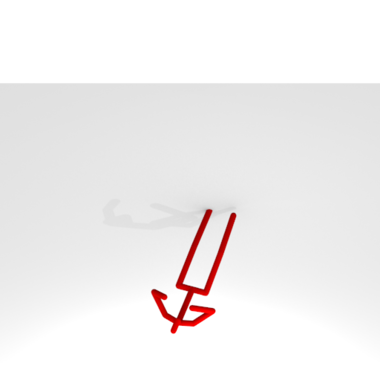}
\eqboxx{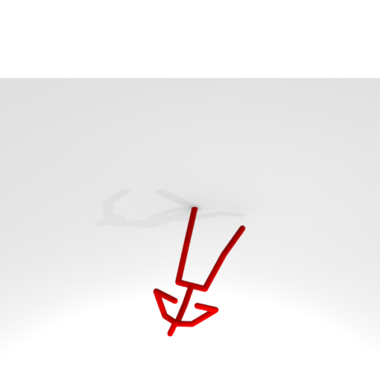}
\eqboxx{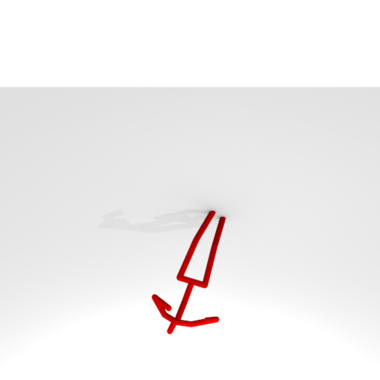}
\eqboxx{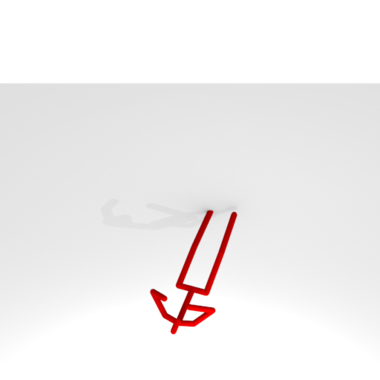}\\
\vspace{0.2cm}
Predicted canonical shape $X = \Phi(Y)$ trained \textbf{without} $\Psi$: \\ \vspace{-0.3cm}
\eqboxx{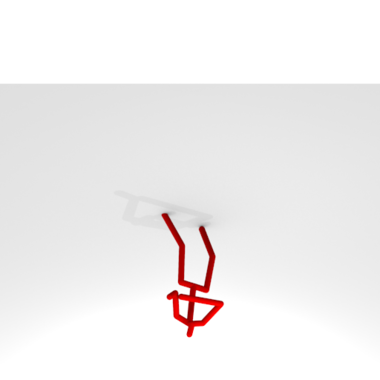}
\eqboxx{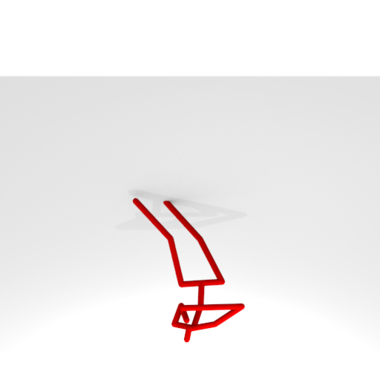}
\eqboxx{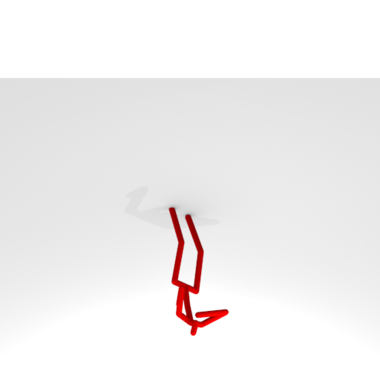}
\eqboxx{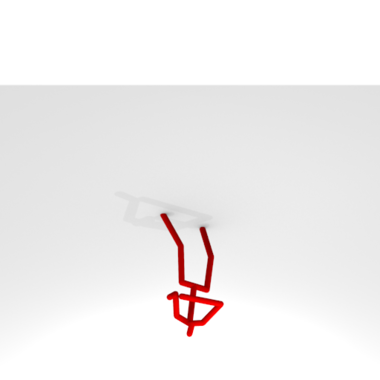}
\eqboxx{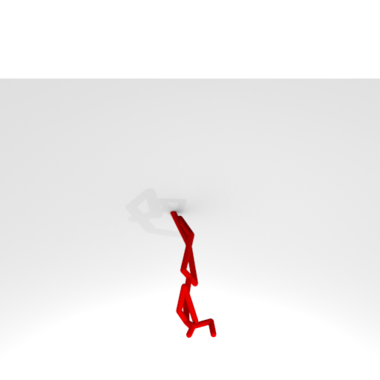}
\eqboxx{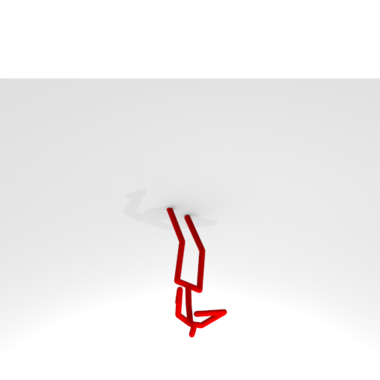}\\ 
\end{center} \vspace{-0.5cm}
\caption{\textbf{Effects of the canonicalization network $\Psi$.} 
Each column shows a 2D pose $Y$ input to the pose prediction network $\Phi$ (top) and the predicted 3D canonical shape $X=\Phi(Y)$ when $\Phi$ is trained with (middle) and without (bottom) the canonicalization network $\Psi$. Observe that training with $\Psi$ provides significantly more stable canonical shape predictions $X$ as the input pose rotates around the camera y-axis.
\label{f:democanonic}}
\end{figure}

A challenge with NR-SFM is the ambiguity in decomposing variations in the 3D shape of an object into viewpoint changes (rigid motions) and internal object deformations ~\cite{xiao2004dense}.
In this section, we propose a novel approach to directly encourage the reconstruction network $\Phi$ to be \emph{consistent} in the way reconstructions are performed.
This means that it must not be possible for the network to produce two different 3D reconstructions that differ only by a rigid motion, because such a difference should have been instead explained as a viewpoint change.

Formally, let $\mathcal{X}_0$ be the set of all reconstructions $X(\alpha;S)$ obtained by the network, where the parameters $(\alpha,\theta) = \Phi(Y,v)$ are obtained by considering all possible views $(Y,v)$ of the object.
If the network factorizes viewpoint and pose consistently, then there cannot be two different reconstructions $X, X' \in \mathcal{X
}_0$ related by a mere viewpoint change $X' = RX$.
This is formalized by the following definition:

\begin{definition}\label{d:transversal}
The set $\mathcal{X}_0\subset\mathbb{R}^{3\times K}$ has the \emph{transversal property} if, for any pair $X,X'\in\mathcal{X}_0$ of structures related by a rotation $X'= RX$, then $X=X'$.
\end{definition}

Transversality can also be interpreted as follows:
rotations partition the space of structures $\mathbb{R}^{3\times K}$ into equivalence classes.
We would like reconstructions to be unique within each equivalence class.
A set that has a unique or \emph{canonical} element for each equivalent class is also called a \emph{transversal}.
\Cref{d:transversal} captures this idea for the set of reconstructions $\mathcal{X}_0$.

For the purpose of learning, we propose to enforce transversality via the following characterizing property (proofs in the supplementary material):

\begin{lemma}\label{l:canonicalization}
The set $\mathcal{X}_0 \subset \mathbb{R}^{3\times K}$ has the transversal property if, and only if, there exists a \emph{canonicalization function} $\Psi : \mathbb{R}^{3\times K}\rightarrow\mathbb{R}^{3\times K}$ such that,
for all rotations $R\in SO(3)$ and structures $X \in \mathcal{X}_0$,
$
X = \Psi(RX).
$
\end{lemma}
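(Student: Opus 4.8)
The plan is to prove the two implications separately, with the bulk of the work — such as it is — going into constructing $\Psi$ for the ``only if'' direction and checking it is well defined.

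For the ``if'' direction, assume we are given a canonicalization function $\Psi$ with $X = \Psi(RX)$ for all $R \in SO(3)$ and $X \in \mathcal{X}_0$. First I would specialize to $R = I$ to conclude that $\Psi$ acts as the identity on $\mathcal{X}_0$, i.e.\ $\Psi(X') = X'$ for every $X' \in \mathcal{X}_0$. Now take any $X, X' \in \mathcal{X}_0$ with $X' = RX$ for some rotation $R$. Applying the defining property to the pair $(R, X)$ gives $\Psi(X') = \Psi(RX) = X$, while the previous observation gives $\Psi(X') = X'$; hence $X = X'$, which is exactly the transversal property of \Cref{d:transversal}.

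For the ``only if'' direction, assume $\mathcal{X}_0$ has the transversal property. Let $\mathcal{O} = \{\, RX : R \in SO(3),\ X \in \mathcal{X}_0 \,\}$ be the orbit of $\mathcal{X}_0$ under rotations. For $Y \in \mathcal{O}$, write $Y = RX$ with $X \in \mathcal{X}_0$ and set $\Psi(Y) := X$; for $Y \notin \mathcal{O}$, define $\Psi(Y)$ arbitrarily (e.g.\ $\Psi(Y) := Y$), since such points play no role in the statement. The only thing to verify is that $\Psi$ is well defined on $\mathcal{O}$: if $Y = RX = R'X'$ with $X, X' \in \mathcal{X}_0$, then $X' = (R')^{-1} R\, X$ with $(R')^{-1} R \in SO(3)$, so transversality forces $X = X'$, and the value $\Psi(Y)$ does not depend on the chosen decomposition. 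Finally, for any $R \in SO(3)$ and $X \in \mathcal{X}_0$ the point $RX$ lies in $\mathcal{O}$ and admits the decomposition $RX = R \cdot X$, so by definition (and the well-definedness just checked) $\Psi(RX) = X$, as required.

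I do not expect a genuine obstacle here: the statement is essentially the observation that a ``transversal'' of the $SO(3)$-action is the same data as a (partial) section of the orbit map, extended arbitrarily to all of $\mathbb{R}^{3\times K}$. The one point that needs care — and the only place the transversal hypothesis is actually used — is the well-definedness argument above; everything else is bookkeeping about the orbit $\mathcal{O}$ and the freedom to extend $\Psi$ off it.
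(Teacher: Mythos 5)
Your proof is correct and follows essentially the same route as the paper's: the ``only if'' direction constructs $\Psi$ by sending $RX$ back to $X$ and uses transversality exactly once to check well-definedness, and the ``if'' direction combines $\Psi(RX)=X$ with $\Psi(X')=X'$ (from $R=I$) to force $X=X'$. Your only addition is the explicit (and harmless) remark about extending $\Psi$ arbitrarily off the orbit of $\mathcal{X}_0$, which the paper leaves implicit.
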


Intuitively, this lemma states that, if $\mathcal{X}_0$ has the transversal property, then any rotation of its elements can be undone unambiguously.
Otherwise stated, we can construct a canonicalization function with range in the set of reconstructions $\mathcal{X}_0$ if, and only if, this set contains only canonical elements, i.e.~it has the transversal property~(\cref{d:transversal}).

For \methodname, the lemma is used to enforce a consistent decomposition in viewpoint and pose via the following loss:
\begin{equation}\label{e:secondary_net}
\ell_2(X,R;\Psi) = \frac{1}{K}\sum_{k=1}^{K} \| X_{:,k} - \Psi(R X)_{:,k} \|_\epsilon,
\end{equation}
where $R \in SO(3)$ is a randomly-sampled rotation, and $\Psi$ is a regressor
\emph{canonicalization}
network trained in parallel with the factorization network $\Phi$.

Regularizer $\ell_2$ (\cref{e:secondary_net}) is combined with the re-projection loss $\ell_1$ (\cref{e:basicloss}) as follows (\cref{f:overview}):
given an input sample $Y_n$, we first pass it through $\Phi(Y_n,v)$ to generate viewpoint and pose parameters $\theta_n$ and $\alpha_n$, which enter the re-projection loss $\ell_2$.
In addition, a random rotation $\hat{R}$ is applied to the generated structure $X_n = X(\alpha_{n};S)$, and $\hat{R}X_n$ is passed to the auxiliary canonicalization neural network $\Psi$.
$\Psi$ then undoes $\hat{R}$ by predicting shape coefficients $\hat \alpha_n$ that produce a shape $\hat X_n = X(\hat \alpha_{n};S)$ which should reconstruct the unrotated input shape $X_n$ as precisely as possible. This is enforced by passing $\hat X_n$ and $X_n$ to the loss $\ell_2$.
The two networks $\Phi$ and $\Psi$ are trained in parallel by minimizing $\ell_1 + \ell_2$, which encourages learning consistent viewpoint-pose factorization.
The effect of the loss is illustrated in~\cref{f:democanonic}.

\subsection{In-plane rotation invariance}\label{s:zequiv}

Rotation equivariance is another property of the factorization network that can be used to constrain learning.
Let $Y = \Pi R X$ be a view of the 3D structure $X$.
Rotating the camera around the optical axis has the effect of applying a rotation  $r_z\in SO(2)$ to the keypoints.
Hence, the two reconstructions $\Phi(Y,v)=(\alpha,\theta)$ and $\Phi(r_zY,v)=(\alpha',\theta')$ must yield the same 3D structure $\alpha=\alpha'$.
This is captured via a modified reprojection loss that exchanges $\alpha$ for $\alpha'$:
\begin{multline}\label{e:equivloss}
\ell_3(Y,v;\Phi,S)
\!=\!
\frac{1}{K}
\sum_{k=1}^K
v_k
\cdot
\| r_zY_k - M(\theta') (\alpha \otimes I_{3}) S_{:,k} \|_\epsilon
\end{multline}
This yields the combined loss $\ell_2 + \ell_3$ (the range of losses are comparable are combined with equal weight).

\newcommand{\hmgt}{H36Mgt}
\newcommand{\hmhg}{H36Mhg}
\newcommand{\abserr}{MPJPE}
\newcommand{\ablrepro}{\methodname-base}
\newcommand{\ablequiv}{\methodname-equiv}
\newcommand{\ablpsi}{\methodname}

\section{Experiments}\label{s:experiments}

\newcommand{\imw}{0.9cm}
\newcommand{\imh}{0.49cm}
\graphicspath{{./images/qual_new_v3/pascal3d_keypoints/}}

\newcommand{\methtext}[1]{\centering\scriptsize\hspace{0.01cm}\rotatebox{90}{\hspace{0.4cm}#1}\hspace{0.02cm}}
\newcommand{\cropbox}[1]{\noindent\adjustbox{trim={.12\width} {.2\height} {.12\width} {.18\height},clip,width=0.45\linewidth}{\includegraphics[width=\linewidth]{\rdrstyle/#1}}}

\newcommand{\pdrow}[1]{%
\begin{minipage}[c]{0.198\linewidth}\centering%
    \noindent\includegraphics[height=1.5cm]{#1_im.jpg}\\
    \methtext{\textbf{\color[RGB]{137,31,124}CMR}}%
    \cropbox{#1_kanazawa_000_090.png}%
    \cropbox{#1_kanazawa_000_135.png}\\
    \methtext{\textbf{\color[RGB]{181,38,25}Ours}}%
    \cropbox{#1_ours_000_090.png}%
    \cropbox{#1_ours_000_135.png}%
\end{minipage}}

\begin{figure*}[t]
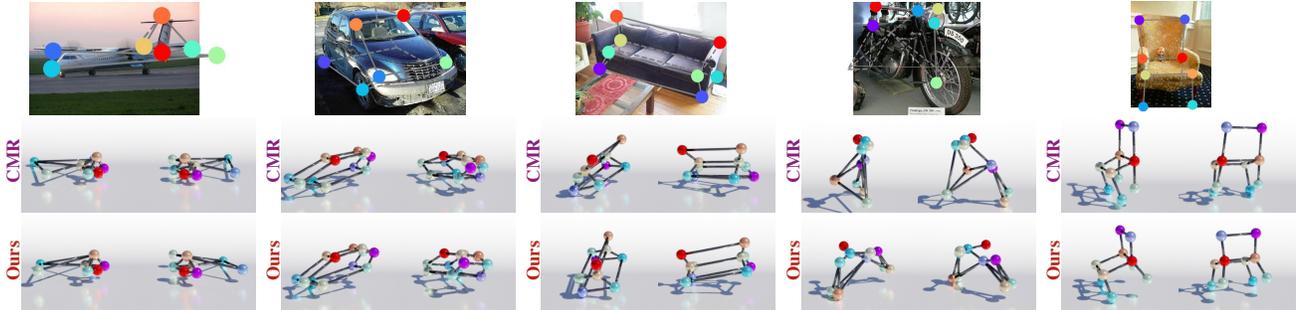

\centering
\pdrow{pcl0000000141}%
\pdrow{pcl0000000326}%
\pdrow{pcl0000000785}%
\pdrow{pcl0000000855}%
\pdrow{pcl0000000978}%
\vspace{\figmargin}
\caption{\textbf{Qualitative results on PASCAL3D+} comparing our method {\color[RGB]{181,38,25}\textbf{\ablpsi-HRNet}} (red) with {\color[RGB]{137,31,124}\textbf{CMR}} \cite{kanazawa2018learning} (violet). Each column contains the input monocular 2D keypoints (top) and lifting of the 2D keypoints into 3D by CMR (middle) and by our method (bottom) viewed from 2 different angles. \label{f:qual_p3d}}
\vspace{-0.3cm}
\end{figure*} 
\let\imw\undefined
\let\imh\undefined
\let\methtext\undefined
\let\cropbox\undefined
\let\pdrow\undefined

In this section, we compare our method against several strong baselines.
First, the employed benchmarks are described followed by quantitative and qualitative evaluations.

\subsection{Datasets}\label{s:datasets} 

We consider three diverse benchmarks containing images of objects with 2D keypoints annotations.
The datasets differ by keypoint density, object type, deformations, and intra-class variations.

\textbf{Synthetic Up3D (S-Up3D)}
We first validate \methodname~in a noiseless setting using a large synthetic 2D/3D dataset of dense human keypoints based on the Unite the People 3D (Up3D) dataset~\cite{lassner2017unite}.
For each Up3D image, the SMPL body shape and pose parameters are provided and are used to produce a mesh with 6890 vertices. Each of the 8515 meshes is randomly rotated into 30 different views and the orthographic projection of each vertex is recorded along with its visibility (computed using a ray tracer). The goal is then to recover the 3D shapes given the set of 2D keypoint renders. We maintain the same train/test split as in the Up3D dataset.

Similar to~\cite{lassner2017unite}, performance is evaluated on the 79 representative vertices of the SMPL model.
Although \methodname~can reconstruct the original set of 6890 SMPL model keypoints effortlessly, we evaluate on a subset of points due to a limited scalability of some of the baselines~\cite{torresani2008nonrigid,fragkiadaki2014grouping}.
For the same reason, we further randomly sampled the generated test poses to 15k images.
Performance is measured by averaging a 3D reconstruction error metric (see below) over all frames in the test set.

\textbf{PASCAL3D+}~\cite{xiang2014beyond}
Similar to~\cite{kanazawa2018learning,tulsiani2017learning}, we evaluate our method on the the PASCAL3D+ dataset which consists of PASCAL VOC and ImageNet images for 12 rigid object categories with a set of sparse keypoints annotated on each image (deformations still arise due to intra-class shape variations).
There are up to 10 CAD models available for each category, from which one is manually selected and aligned for each image, providing an estimate of the ground truth 3D keypoint locations.
To maintain consistency between the 2D and 3D keypoints, we use the 2D orthographic projections of the aligned CAD model keypoints as opposed to the per-image 2D keypoint annotations, and update the visibility indicators based on the CAD model annotations. 

\begin{table}
\centering
\setlength{\tabcolsep}{0.3cm}
\begin{tabular}{ l  r r r }
\toprule
Method                                 & \abserr         & Stress \\
\midrule
EM-SfM \cite{torresani2008nonrigid}    & 0.107          & 0.061 \\
GbNrSfM \cite{fragkiadaki2014grouping} & 0.093          & 0.062 \\
\textbf{\ablrepro}                     & 0.160          & 0.105 \\
\textbf{\ablequiv}                     & 0.154          & 0.102 \\
\textbf{\ablpsi}                       & \textbf{0.068} & \textbf{0.040} \\
\bottomrule
\end{tabular}
\vspace{\tabmargin}
\caption{\textbf{Results on the synthetic Up-3D (S-Up3D)} comparing our method (\ablpsi), NRSfM baselines \cite{torresani2008nonrigid,fragkiadaki2014grouping} and two variants of our method (\ablequiv, \ablrepro) which ablate effects of individual components of \ablpsi.}
\label{t:exp_up3d}
\vspace{-0.2cm}
\end{table}

\textbf{Human3.6M}~\cite{ionescu2014human36M} is perhaps the largest dataset of human poses annotated with 3D ground truth extracted using MoCap systems.
As in \cite{kudo2018unsupervised}, two variants of the dataset are used: the first contains ground-truth 2D keypoints during both train and test time and in the second, 2D keypoint locations are obtained by the Stacked Hourglass network of~\cite{toshev2014deeppose}.
We closely follow the evaluation protocol of~\cite{kudo2018unsupervised} and report absolute errors measured over 17 joints without any procrustes alignment.
We maintain the same train and test split as~\cite{kudo2018unsupervised}, and report an average over errors attained for each frame in a given MoCap sequence of an action type.

\begin{table}[t]
\centering
\setlength{\tabcolsep}{0.3cm}
\begin{tabular}{lrr}
\toprule
Method & \abserr & Stress \\
\midrule
GbNrSfM \cite{fragkiadaki2014grouping}  & 184.6 & 111.3 \\
EM-SfM \cite{torresani2008nonrigid} & 131.0 & 116.8 \\
\textbf{\ablrepro} & 53.5 & 46.8 \\
\textbf{\ablequiv} & 50.1 & 44.5 \\
\textbf{\ablpsi}   & \textbf{38.0} & \textbf{32.6} \\
\midrule
CMR \cite{kanazawa2018learning}$^\dagger$  & 74.4 & 53.7 \\
\textbf{\ablpsi} + HRNet$^\dagger$ & \textbf{57.5} & \textbf{41.4} \\
\bottomrule
\end{tabular}
\vspace{\tabmargin}
\caption{ \textbf{Average reconstruction error (\abserr) and $\ell_1$ stress over the 12 classes of Pascal3D} comparing our method \ablpsi~with two ablations of our approach (\ablequiv, \ablrepro) and the methods from \cite{fragkiadaki2014grouping,kanazawa2018learning,torresani2008nonrigid}. Approaches marked with $^\dagger$ predict 3D shape without knowledge of the ground-truth 2D keypoints at test time. }
\label{t:exp_p3d_allclass}
\vspace{-0.2cm}
\end{table}

\textbf{CUB-200-2011}~\cite{Wah2011cubbirds} consists of 11,788 images of 200 bird species.
Each image is annotated with 2D locations of 15 semantic keypoints and corresponding visibility indicators.
There are no ground truth 3D keypoints for this dataset so we only perform a qualitative evaluation.
We use the 2D annotations from~\cite{kanazawa2018learning}.

\subsection{Evaluation metrics}\label{s:metrics}

As common practice, the absolute mean per joint position error is reported:
$
\mathbf{\abserr}(X^*, X)
=
\sum_{k=1}^K \|X_k - X_k^*\| / K,
$
where $X_k \in \mathbb{R}^3$ is the predicted 3D location of the $k$-th keypoint and $X_k^*$ is its corresponding ground-truth 3D location (both in the 3D frame of the camera).

In order to evaluate \abserr~properly, two types of projection ambiguities have to be handled.
To deal with the \textbf{absolute depth} ambiguity, for Human3.6M we follow~\cite{kudo2018unsupervised} and normalize each pose by applying a translation that puts the skeleton root to the origin of the coordinate system.
For PASCAL3D+ and S-Up3D, the mean depth of predicted and ground truth point clouds is zero centered before evaluation.
The second, \textbf{depth flip} ambiguity, is resolved
as in~\cite{torresani2008nonrigid} by evaluating \abserr~twice for the original and depth-flipped point cloud, retaining the better of the two.

We also report the $\ell_1$
$
\mathbf{Stress}(X, X^*)
=
\sum_{i<j}
|~\|X_i-X_j\| - \|\hat X_i^*-X_j^*\|~|_1 / (K(K-1)).
$
This metric is invariant to camera pose and the absolute depth and z-flip ambiguities.

\subsection{Baselines}

\newcommand{\imw}{0.9cm}
\newcommand{\imh}{0.49cm}
\graphicspath{{./images/qual_new_v2/h36m_keypoints_json/}}

\newcommand{\methtext}[1]{\centering\scriptsize\hspace{0.01cm}\rotatebox{90}{\hspace{0.4cm}#1}\hspace{0.02cm}}
\newcommand{\methtextO}[1]{\centering\scriptsize\hspace{0.01cm}\rotatebox{90}{\hspace{0.6cm}#1}\hspace{0.02cm}}
\newcommand{\cropbox}[1]{\noindent\adjustbox{trim={.15\width} {.0\height} {.15\width} {.1\height},clip,width=0.45\linewidth}{\includegraphics[width=\linewidth]{\rdrstyle/#1}}}

\newcommand{\hmrow}[1]{%
\begin{minipage}[c]{0.198\linewidth}\centering%
    \noindent\includegraphics[height=2cm]{#1_im.jpeg}\\
    \methtext{\textbf{\color[RGB]{66,154,201}PoseGAN}}%
    \cropbox{#1_posegan_000_090.png}%
    \cropbox{#1_posegan_000_135.png}\\
    \methtextO{\textbf{\color[RGB]{181,38,25}Ours}}%
    \cropbox{#1_ours_000_090.png}%
    \cropbox{#1_ours_000_135.png}%
\end{minipage}}

\begin{figure*}[t]
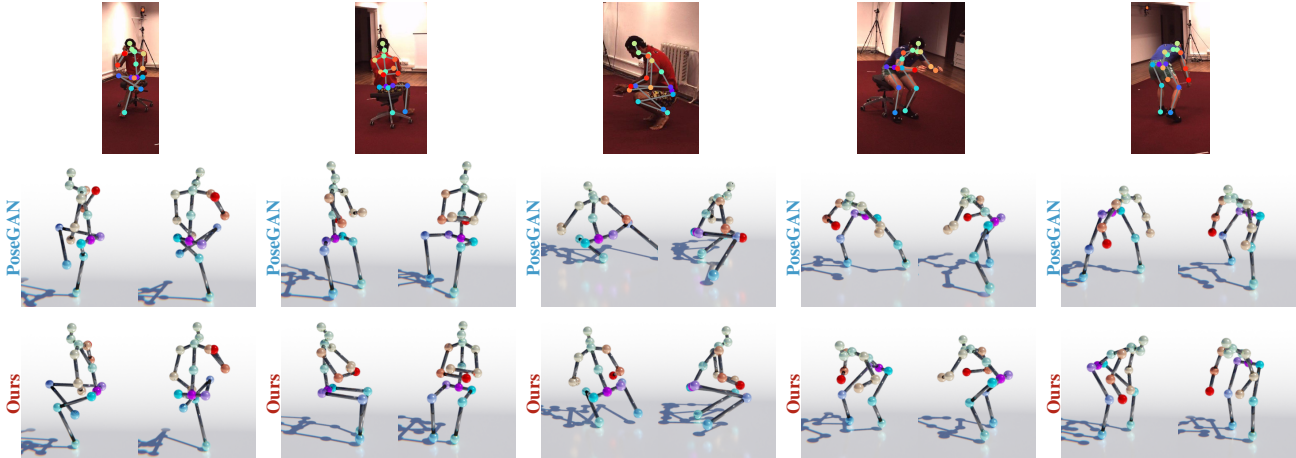

\centering
\hmrow{pcl0000024588}%
\hmrow{pcl0000036007}%
\hmrow{pcl0000054402}%
\hmrow{pcl0000072998}%
\hmrow{pcl0000087026}%
\vspace{\figmargin}
\caption{\textbf{3D poses on Human3.6M} predicted from monocular keypoints. Each column contains the input 2D keypoints (top) and a comparison between {\color[RGB]{66,154,201}\textbf{PoseGAN}} \cite{kudo2018unsupervised} (blue, middle), and our method {\color[RGB]{181,38,25}\textbf{\methodname}} (bottom, red) from two 3D viewpoints.
\label{f:qual_h36m}}
\vspace{-0.3cm}
\end{figure*} 
\let\imw\undefined
\let\imh\undefined
\let\methtext\undefined
\let\cropbox\undefined
\let\hmrow\undefined

\methodname~is compared to several strong baselines.
\textbf{EM-SfM}~\cite{torresani2008nonrigid} and \textbf{GbNrSfM}~\cite{fragkiadaki2014grouping} are NR-SFM methods with publicly available code.
Because, when using \cite{fragkiadaki2014grouping,torresani2008nonrigid}, it is difficult to make predictions on previously unseen data, we run the two methods directly on the test set and report results after convergence.
This gives the two baselines an advantage over our method.
On Human3.6M, out of several available methods, we compare with~\cite{kudo2018unsupervised} (\textbf{Pose-GAN}) which is a current \sota~approach for unsupervised 3D pose estimation that does not require any 3D, multiview or video annotations.
Unlike other weakly supervised methods~\cite{drover20183dpose,kocabas2019self}, Pose-GAN does not assume knowledge of the camera intrinsic parameters, hence it is the most comparable to our approach. To ensure fair comparison, we use their public evaluation code together with the provided keypoint detections of the Stacked Hourglass model. Pose-GAN was not tested on other datasets as the method cannot handle inputs with occluded keypoints.
On PASCAL3D+, our method is compared with Category-Specific Mesh Reconstruction (\textbf{CMR}) from \cite{kanazawa2018learning}.
CMR provides results for 2 categories out of the 12 of PASCAL3D+, but we trained models for all 12 categories using the public code.
Note that CMR additionally uses segmentation masks during training, hence has a higher level of supervision than our method.

\begin{table}[t]
\centering
\begin{tabular}{l r r r r }
\toprule
\multirow{ 2}{*}{Method} & 
\multicolumn{2}{c}{Ground truth pose} &  
\multicolumn{2}{c}{Predicted pose} \\
\cmidrule(lr){2-3} \cmidrule(lr){4-5}
& \abserr & Stress &  \abserr & Stress \\
\midrule				
Pose-GAN \cite{kudo2018unsupervised}   & 130.9          & 51.8          & 173.2   & -       \\
\textbf{\ablrepro}                     & 135.2          & 56.9          & 201.6 & 101.4 \\
\textbf{\ablequiv}                     & 128.2          & 53.0          & 190.4 & 93.9 \\
\textbf{\ablpsi}                       & \textbf{101.8} & \textbf{43.5} & \textbf{153.0} & \textbf{86.0} \\
\bottomrule
\end{tabular}
\vspace{\tabmargin}
\caption{\textbf{Results on Human3.6M} reporting average per joint position error (\abserr) and $\ell_1$ stress over the set of test actions (follows the evaluation protocol from \cite{kudo2018unsupervised}). We compare performance, when ground truth pose keypoints are available during test-time (2nd and 3rd column) and when the keypoints are predicted using the Stacked Hourglass network \cite{toshev2014deeppose} (4th and 5th column).}
\label{t:exp_h36M}
\vspace{-0.15cm}
\end{table}

The effects of individual components of our method are evaluated by ablation and recording the change in performance.
This generates three variants of our method:
(1) \textbf{\ablrepro} only optimizes the re-projection loss $\ell_1(\Phi)$ from~\cref{e:basicloss},
(2) \textbf{\ablequiv} replaces $\ell_1(\Phi)$ with the optimization of the z-invariant loss $\ell_3(\Phi)$ (\cref{s:zequiv}),
(3) \textbf{\ablpsi} extends \ablequiv~with the secondary canonicalization network $\Psi$ (\cref{s:canonicalization}).

\newcommand{\imw}{0.9cm}
\newcommand{\imh}{0.49cm}
\graphicspath{{./images/qual_new_v3/cub_birds_keypoints/}}

\newcommand{\methtext}[1]{\centering\scriptsize\hspace{0.01cm}\rotatebox{90}{\hspace{0.4cm}#1}\hspace{0.01cm}}
\newcommand{\cropbox}[1]{\noindent\adjustbox{trim={.11\width} {.07\height} {.11\width} {.18\height},clip,width=0.45\linewidth}{\includegraphics[width=\linewidth]{\rdrstyle/#1}}}

\newcommand{\birdrow}[1]{%
\begin{minipage}[c]{0.198\linewidth}\centering%
    \noindent\includegraphics[height=1.5cm]{#1_im.jpg}\\
    \methtext{\textbf{\color[RGB]{137,31,124}CMR}}%
    \cropbox{#1_kanazawa_000_000.png}%
    \cropbox{#1_kanazawa_000_090.png}%
    \methtext{\textbf{\color[RGB]{181,38,25}Ours}}%
    \cropbox{#1_ours_000_000.png}%
    \cropbox{#1_ours_000_090.png}%
\end{minipage}}

\begin{figure*}[t]
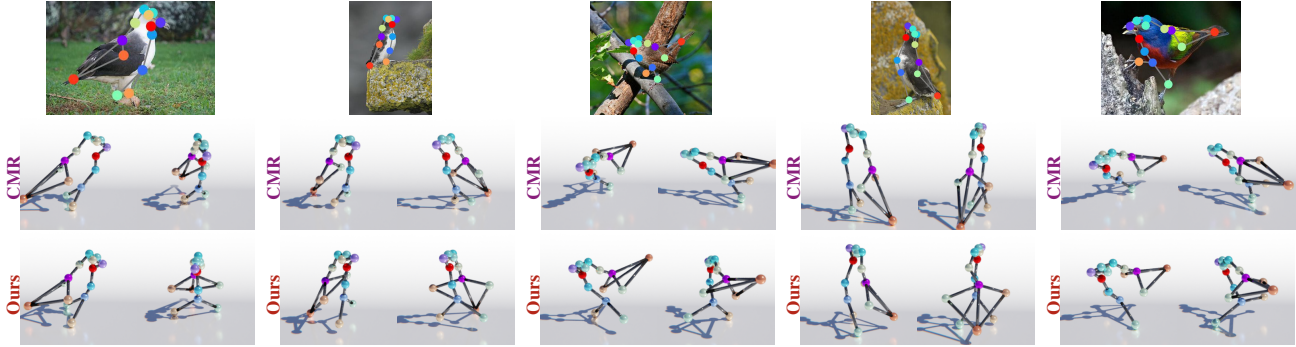
 
\centering
\birdrow{pcl0000000022}%
\birdrow{pcl0000000069}%
\birdrow{pcl0000002855}%
\birdrow{pcl0000000070}%
\birdrow{pcl0000000196}%
\vspace{\figmargin}
\caption{\textbf{Qualitative results on CUB-200-2011} comparing our method {\color[RGB]{181,38,25}\textbf{\ablpsi-HRNet}} (red) with {\color[RGB]{137,31,124}\textbf{CMR}} \cite{kanazawa2018learning} (violet). Each column contains the input monocular 2D keypoints (top), lifting of the 2D keypoints into 3D by CMR (middle) and by our method (bottom) from 2 different 3D viewpoints (the same view and a view offset by 90$^{\circ}$ along camera y-axis). \label{f:qual_birds} } 
\vspace{-0.3cm}
\label{fig:qual_birds_rebuttal}
\end{figure*}
\let\imw\undefined
\let\imh\undefined
\let\methtext\undefined
\let\cropbox\undefined
\let\birdrow\undefined

\subsection{Technical details} \label{s:details}

Networks $\Psi$ and $\Phi$ share the same core architecture and consist of 6 fully connected residual layers each with 1024/256/1024 neurons (please refer to the supplementary material for architecture details).
Residual skip connections were found important since they prevented networks from converging to a rigid average shape.

Keypoints $Y_n$ are first zero-centered before being passed to $\Psi$.
We further scale each set of centered 2D locations by the same scalar factor so their extent is roughly $[-1,1]$ on average in the axis of the highest variance.
The network is trained with a 
batched SGD
optimizer with momentum with an initial learning rate of 0.001, decaying 10 fold whenever the training objective plateaued. The batch size was set to 256.
The training losses $\ell_3(\Phi)$ and $\ell_2(\Psi)$ were weighted equally.

For Human3.6M, we did not model the translation $T$ of the camera as the centroid of the input 2D keypoints coincides with the centroid of the 3D shape (due to the lack of occluded keypoints).
For the other datasets, which contain occlusions, we estimate the camera translation as the difference vector between the mean of the input visible points and the re-projected visible 3D shape keypoints.

In order to adapt our method for the multiclass setting of PASCAL3D+, which has different sets of keypoints for each of the 12 object categories, we adjust the keypoint annotations as follows.
For each object category $\mathcal{C} \in \{1,\dots,12\}$ with a set of $K_\mathcal{C}$ keypoints $Y_n^\mathcal{C} \in \mathcal{R}^{2 \times K_\mathcal{C}}$ in an image $n$, we form a multiclass keypoint annotation
$
Y_n = \left[ \mathbf{0}, \dots, Y_n^\mathcal{C}, \dots, \mathbf{0} \right]
$
by assigning ${Y_n^\mathcal{C}}$ to the $\mathcal{C}$-th block of $Y_n$ and padding with zeros.
The visibility indicators $v_n$ are expanded in a similar fashion.
This avoids reconstructing each class separately, allowing our method to train only once for all classes.
This also tests the ability of the model to capture non-rigid deformations not only within, but also across object categories.
While this expanded version of keypoint annotations was also tested for GbNrSfM, for EM-SfM, we could not obtain satisfactory performance and reconstructed each class independently.
Similarly for CMR, 12 class-specific models were trained separately.

\subsection{Results}

\paragraph{Synthetic Up3D.} \Cref{t:exp_up3d} reports the results on the S-Up3D dataset.
Our method outperforms both EM-SfM and GbNrSfM, which validates our approach as a potential replacement for existing NR-SFM methods based on matrix factorization.
The table also shows that~\ablpsi~performs substantially better than~\ablrepro, highlighting the importance of the canonicalization network $\Psi$.

\paragraph{PASCAL3D+.}
For PASCAL3D+ we consider two types of methods.
Methods of the first type include GbNrSfM and EM-SfM and take as input 2D ground truth keypoint annotations on the PASCAL3D+ test set, reconstructing it directly.
The second type is CMR which uses ground truth annotations for training, but does not use keypoint annotations for evaluation on the test data.
In order to make our method comparable with CMR, we used as a detector the High Resolution Residual network (HRNet~\cite{sun2019deep}), training it on the 2D keypoint annotations from the PASCAL3D+ training set.
The trained HRNet is applied to the test set to extract the 2D keypoints $Y$ and these are lifted to 3D by applying \ablpsi ~(abbreviated as \textbf{\ablpsi+HRNet}).

The results are reported in~\cref{t:exp_p3d_allclass}.
\ablpsi~performs better than EM-SfM and GbNrSfM when ground truth keypoints are available during testing.
Our method also outperforms CMR by 16\%.
On several classes (motorbike, train), we obtain significantly better results due to the reliance of CMR on an initial off-the-shelf rigid SFM algorithm that fails to obtain satisfactory reconstructions.
This result is especially interesting since, unlike CMR, \ablpsi~is trained for all classes at once without ground truth segmentation masks.
\Cref{f:qual_p3d} contains qualitative evaluation.

\paragraph{Human3.6M.}
Results on the Human3.6M dataset are summarized in~\cref{t:exp_h36M}. \ablpsi~outperforms Pose-GAN for both ground truth and predicted keypoint annotations.
Again, \ablpsi~improves over baseline \ablrepro~by a significant margin.
Example reconstructions are in \Cref{f:qual_h36m}.

\paragraph{CUB-200-2011.}
Similar to PASCAL3D+, in order to make our method comparable with CMR, HRNet is trained on keypoints from the CUB-200-2011 train set and used to predict keypoints on unseen test images which are then input to \ablpsi. \Cref{f:qual_birds} compares qualitatively our reconstructions to CMR.
Our method is capable of modelling more flexible poses than CMR.
We hypothesize this is because of the reliance of CMR on an estimate of the camera matrices obtained using rigid SFM which limits the flexibility of the learned deformations.
On the other hand, CMR does not use a keypoint detector.
\section{Conclusions}\label{s:conclusions}

We have proposed a new approach to learn a model of a 3D object category from unconstrained monocular views with 2D keypoints annotations.
Compared to traditional solutions that cast this as NR-SFM and solve it via matrix factorization, our solution is based on learning a deep network that performs monocular 3D reconstruction and factorizes internal object deformations and viewpoint changes.
While this factorization is an ambiguous task, we have shown a novel approach that constrains the solution recovered by the learning algorithm to be as consistent as possible by means of an auxiliary canonicalization network. We have shown that this leads to considerably better performance, enough to outperform strong baselines on benchmarks that contain large non-rigid deformations within a category (Human3.6M, Up3D) and across categories (PASCAL3D+).
{\bibliographystyle{ieee_fullname}\bibliography{main}}
{\newpage\cleardoublepage\appendix\begin{strip}%
 \centering
 \Large
 \textbf{
    C3PO: Canonical 3D Pose Networks for Non-Rigid Structure From Motion \\ \vspace{0.3cm} \textit{Supplementary material}
 }
\vspace{0.5cm}
\end{strip}

The first part of the supplementary material contains discussions regarding the role of the camera translation in the formulation of SFM/NR-SFM (\cref{s:centering}), number of degrees of freedom (\cref{s:dof}) and a proof of \cref{l:canonicalization}.
Additional information about the architecture of the proposed deep networks is in \cref{s:supp_arch}.
\Cref{s:supp_robust} provides additional analysis of the robustness of \methodname ~ to the input noise. \Cref{s:supp_qual_more} presents additional qualitative results and \cref{s:supp_failure} discusses failure modes of our method.

\section{Theoretical analysis}

This section contains additional information regarding various theoretical aspects of the NR-SFM task.

\subsection{Centering}\label{s:centering}

This section summarizes well known results on data centering in orthographic SFM and NR-SFM.

\begin{lemma}
Equations $y_{nk} = \Pi R_n X_k + \Pi T_n$ hold true for all $n=1,\dots,N$ and $k=1,\dots,K$ if, and only if, equations $\bar y_{nk} = \Pi R_n \bar X_k$ hold true, where
$$
\bar y_{nk}  = y_{nk} - \frac{1}{K}\sum_{k=1}^K y_{nk},
\quad
\bar X_{k}  = X_{k} - \frac{1}{K}\sum_{k=1}^K X_{k}. 
$$
\end{lemma}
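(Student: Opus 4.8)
The plan is to prove the two implications separately: the ``only if'' direction by averaging the projection equations over the keypoint index, and the ``if'' direction by reconstructing a suitable translation from the centered system.

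For the forward direction, I would start from $y_{nk} = \Pi R_n X_k + \Pi T_n$, fix an arbitrary $n$, and average both sides over $k = 1,\dots,K$. Since $\Pi R_n$ and $\Pi T_n$ are independent of $k$, this gives $\frac{1}{K}\sum_k y_{nk} = \Pi R_n\bigl(\frac{1}{K}\sum_k X_k\bigr) + \Pi T_n$. Subtracting this averaged identity from the original equation cancels the $\Pi T_n$ term, and linearity of the map $\Pi R_n$ turns $X_k - \frac{1}{K}\sum_k X_k$ into $\bar X_k$, yielding $\bar y_{nk} = \Pi R_n \bar X_k$ directly.

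For the backward direction, the key observation is that the centered equations pin down everything except an additive offset in the image plane. I would define $T_n\in\mathbb{R}^3$ by setting $\Pi T_n := \frac{1}{K}\sum_k y_{nk} - \Pi R_n\bigl(\frac{1}{K}\sum_k X_k\bigr)$, leaving the third coordinate of $T_n$ free (it may be taken to be zero, which reflects the standard fact that translation along the optical axis is unobservable under orthographic projection). Adding $\frac{1}{K}\sum_j y_{nj}$ to both sides of $\bar y_{nk} = \Pi R_n \bar X_k$ and substituting the definitions of $\bar y_{nk}$, $\bar X_k$, and $T_n$ then recovers $y_{nk} = \Pi R_n X_k + \Pi T_n$ for all $n,k$.

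The computation is routine, so I do not anticipate a genuine obstacle; the one point requiring care is the non-invertibility of $\Pi$ in the backward direction, which is why the statement should be read as an equivalence modulo the choice of the translations $T_n$ (i.e.\ the centered system holds iff \emph{there exist} $T_n$ making the uncentered system hold), rather than as a one-to-one correspondence. The main effort is thus in phrasing the two directions cleanly and making explicit that only $\Pi T_n$, not $T_n$ itself, is determined by the data.
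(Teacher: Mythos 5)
Your proof is correct and takes essentially the same approach as the paper, whose entire argument is the one-line instruction to average each equation over $k$ and subtract the result from both sides. Your additional care in the backward direction --- reconstructing $\Pi T_n$ from the keypoint and structure centroids and noting that the statement is really an equivalence up to the existence of suitable translations $T_n$ (with the optical-axis component unobservable) --- is a point the paper glosses over, and is a welcome clarification rather than a divergence.
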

\begin{proof}
Average and remove the LHS and RHS of each equation from both sides.
\end{proof}

\begin{lemma}
Equation $y_{nk} = \Pi(R_n \sum_{d=1}^D \alpha_{nd} S_{dk} + T_n)$ holds true for all $n=1,\dots,N$ and $k=1,\dots,K$ if, and only if, equation $\bar y_{nk} = \Pi R_n \left(\sum_{d=1}^D \alpha_{nd} \bar S_{dk}\right)$ holds true, where
$$
\bar y_{nk}  = y_{nk} - \frac{1}{K}\sum_{k=1}^K y_{nk},
\quad
\bar S_{dk}  = S_{dk} - \frac{1}{K}\sum_{k=1}^K S_{dk}.
$$
\end{lemma}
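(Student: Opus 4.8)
The plan is to reduce this statement to the previous (rigid) centering lemma by noting that, for each fixed view $n$, the combination $X_{nk} := \sum_{d=1}^D \alpha_{nd} S_{dk}$ plays exactly the role that $X_k$ played there. The one identity I need first is that centering the shape-basis rows is the same as centering the reconstructed structure: since the coefficients $\alpha_{nd}$ do not depend on $k$, the average over $k$ passes through the finite $d$-sum, so
\[
  \sum_{d=1}^D \alpha_{nd}\bar S_{dk}
  = \sum_{d=1}^D \alpha_{nd} S_{dk} - \sum_{d=1}^D \alpha_{nd}\Bigl(\tfrac1K\sum_{k'=1}^K S_{dk'}\Bigr)
  = X_{nk} - \tfrac1K\sum_{k'=1}^K X_{nk'}
  =: \bar X_{nk}.
\]
Hence the left-hand equation reads $y_{nk} = \Pi R_n X_{nk} + \Pi T_n$ and the right-hand one reads $\bar y_{nk} = \Pi R_n \bar X_{nk}$, and the previous lemma, applied with $X_k \mapsto X_{nk}$ for each $n$ separately, gives the equivalence immediately.

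If a self-contained argument is preferred, the same two-line proof as before works verbatim. For the forward implication, fix $n$ and average the identity $y_{nk} = \Pi(R_n\sum_d \alpha_{nd}S_{dk} + T_n)$ over $k=1,\dots,K$: the linear maps $\Pi$ and $R_n$ and the $d$-sum all commute with the average, while $\Pi T_n$ is constant in $k$; subtracting the averaged equation from the original one cancels $\Pi T_n$ and produces $\bar y_{nk} = \Pi R_n \sum_d \alpha_{nd}\bar S_{dk}$. For the converse, write $y_{nk} = \bar y_{nk} + \tfrac1K\sum_{k'} y_{nk'}$ and $S_{dk} = \bar S_{dk} + \tfrac1K\sum_{k'}S_{dk'}$, substitute into $\bar y_{nk} = \Pi R_n\sum_d\alpha_{nd}\bar S_{dk}$, and collect the $k$-independent terms; one recovers the original equation provided $\Pi T_n := \tfrac1K\sum_{k'} y_{nk'} - \Pi R_n\sum_d\alpha_{nd}\tfrac1K\sum_{k'}S_{dk'}$, which is admissible because $\Pi = [\,I_2\ 0\,]$ maps onto $\mathbb{R}^2$, so any prescribed value is realized by a suitable $T_n$.

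I do not anticipate a genuine obstacle: this is linear bookkeeping. The only thing to track is that, unlike the rigid case, the structure now depends on the view index $n$ through $\alpha_{nd}$, so the averaging and the choice of $T_n$ must be carried out per $n$; but every operation involved is linear in the index $k$ over which we average, and $\alpha_{nd}$ is $k$-independent, so the reduction to the earlier lemma goes through without complication. It is worth stating explicitly that this is exactly the justification for dropping the translation and working with centered data in the NR-SFM factorization~\eqref{e:nrsfm} and in the losses of the main text.
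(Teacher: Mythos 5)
Your proof is correct and follows essentially the same route as the paper's, which simply says ``average and remove the LHS and RHS of each equation from both sides.'' You are in fact more careful than the paper on the converse direction, where the existence of a suitable $T_n$ (via the surjectivity of $\Pi$) is needed and is left implicit there.
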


\begin{proof}
Average and remove the LHS and RHS of each equation from both sides.
\end{proof}

\subsection{Degrees of freedom and ambiguities}\label{s:dof}

Seen as matrix factorization problems, SFM and NR-SFM have \emph{intrinsic ambiguities}; namely, no matter how many points and views are observed, there is always a space of equivalent solutions that satisfy all the observations.
Next, we discuss what are these ambiguities and under which conditions they are minimized.

\subsubsection{Structure from motion}

The SFM~\cref{e:sfm} contains $2NK$ constraints and $6N + 3K$ unknowns.
However, there is an unsolvable ambiguity: $MX = (MA^{-1}) (AX)$ means that, if $(M,X)$ is a solution, so $(MA^{-1}, AX)$ is another, for any invertible matrix $A \in \mathbb{R}^{3\times 3}$.
If $X$ is full rank and there are at least $N\geq 2$ views, we can show that this is the \emph{only} ambiguity, which has 9 degrees of freedom (DoF).
Thus finding a unique solution up to these residual 9 DoF requires $2NK \geq 6N + 3K - 9$.
For example, with $N=2$ views, we require $K \ge 3$ keypoints.
Furthermore, the 3D point configuration must not be degenerate, in the sense that $X$ must be full rank.

The ambiguity can be further reduced by considering the fact that the view matrices $M$ are not arbitrary; they are instead the first two rows of rotation matrices.
We can exploit this fact by setting $M_1 = I_{2\times 3}$ (which also standardize the rotation of the first camera), fixing 6 of the 9 DoF.
    
\newcommand{\imw}{0.9cm}
\newcommand{\imh}{2cm}
\newcommand{\imwin}{1.9cm}
\graphicspath{{./images/qual_new_v2/up3d_keypoints/}}

\newcommand{\cropbox}[1]{\noindent\adjustbox{trim={.01\width} {.02\height} {.01\width} {.02\height},clip}{\includegraphics[height=\imh]{\rdrstyle/#1}}}
\newcommand{\cropboxx}[1]{\noindent\adjustbox{trim={.3\width} {.3\height} {.3\width} {.3\height},clip}{\includegraphics[height=5cm]{#1}}}

\newcommand{\uprow}[1]{%
\begin{minipage}[c]{0.245\linewidth}\centering\footnotesize%
    \cropboxx{#1_image_000_000_kp.png}\\
    \cropbox{#1_ours_000_090.png}%
    \cropbox{#1_ours_000_135.png}%
\end{minipage}}

\begin{figure*}[t]
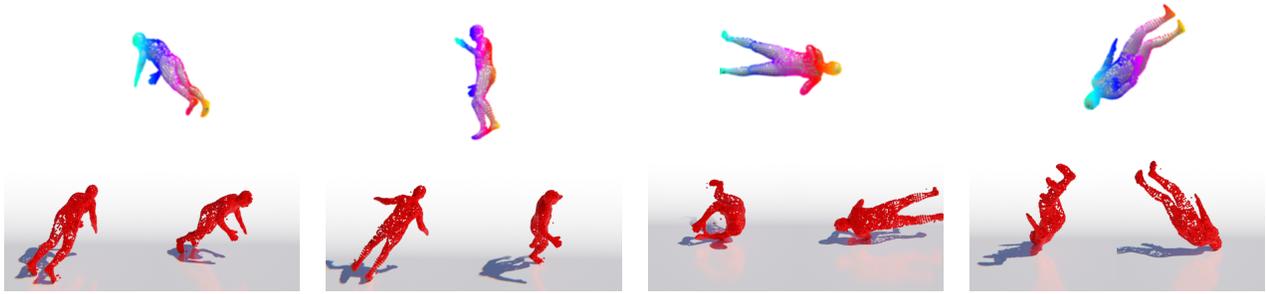

\centering
\uprow{pcl0000009922}%
\uprow{pcl0000001369}%
\uprow{pcl0000001448}%
\uprow{pcl0000002004}%
\vspace{\figmargin}
\caption{\textbf{Qualitative results on S-Up3D} showing input 2D keypoint annotations (top row) and monocular 3D reconstructions of all 6890 vertices of the SMPL model as predicted by \methodname ~ from two different viewpoints (bottom row).\label{f:qual_up3d}}
\vspace{-0.3cm}
\end{figure*} 
\let\imw\undefined
\let\imh\undefined
\let\methtext\undefined
\let\cropbox\undefined
\let\uprow\undefined

\subsubsection{Non-rigid structure from motion}

The NR-SFM equation contains $2NK$ constraints and $6N + ND + 3DK$ unknowns.
The intrinsic ambiguity has at least 9 DoF as in the SFM case. 
Hence, for a unique solution (up to the intrinsic ambiguity) we must have $2NK \geq 6N + ND + 3DK - 9$.
Compared to the SFM case, the number of unknowns grows with the number $N$ of views as $(6 + D)N$ instead of just $6N$, where $D$ is the dimension of the shape basis.
Since the number of constraints grows as $(2K)N$, we must have $K \geq 3 + D/2$ keypoints.

Note that once the shape basis $S$ is learned, it is possible to perform 3D reconstruction from a single view by solving~\eqref{e:nrsfm} for $N=1$; in this case there are $2K$ equations and $6 + D$ unknowns, which is once more solvable when $K \geq 3 + D/2$.

\subsection{Proof of \cref{l:canonicalization} }

\begin{figure}
    \centering
    \includegraphics[width=\linewidth]{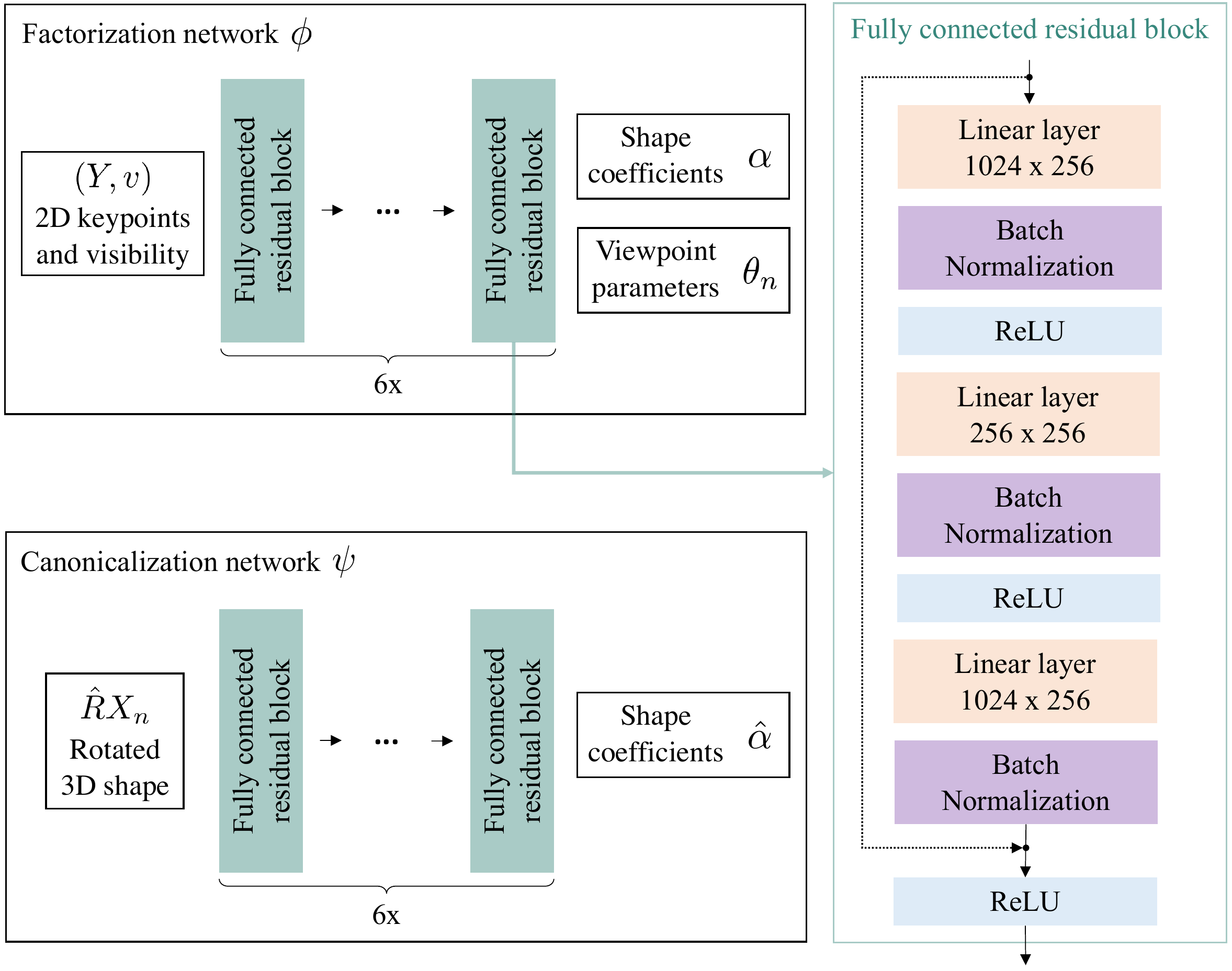}
    \caption{\textbf{The architecture of $\Psi$ and $\Phi$.} Both networks share the same trunk (6x fully connected residual layers) and differ in the type of their inputs and outputs.}
    \label{f:supp_net}
    \end{figure}

\begin{lemma}
The set $\mathcal{X}_0 \subset \mathbb{R}^{3\times K}$ has the transversal property if, and only if, there exists a \emph{canonicalization function} $\Psi : \mathbb{R}^{3\times K}\rightarrow\mathbb{R}^{3\times K}$ such that,
for all rotations $R\in SO(3)$ and structures $X \in \mathcal{X}_0$,
$
X = \Psi(RX).
$
\end{lemma}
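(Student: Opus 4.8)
The plan is to prove the two directions of the equivalence separately, using the transversal property (\cref{d:transversal}) to pin down a well-defined map for the ``only if'' direction.

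For the direction ``transversal $\Rightarrow$ $\Psi$ exists'': first I would fix an arbitrary structure $Z \in \mathbb{R}^{3\times K}$ and ask whether its rotation orbit $\{RZ : R \in SO(3)\}$ meets $\mathcal{X}_0$. If it does, then by the transversal property it meets $\mathcal{X}_0$ in \emph{at most} one point --- because any two points of $\mathcal{X}_0$ in the same orbit are related by a rotation and hence, by \cref{d:transversal}, equal. So I would define $\Psi(Z)$ to be that unique point when the orbit of $Z$ intersects $\mathcal{X}_0$, and define $\Psi(Z)$ arbitrarily (say $\Psi(Z) = 0$, or $\Psi(Z) = Z$) otherwise; the behaviour off the union of orbits of $\mathcal{X}_0$ is irrelevant to the claim. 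It then remains to check the defining identity: for $X \in \mathcal{X}_0$ and $R \in SO(3)$, the orbit of $RX$ contains $X \in \mathcal{X}_0$, so $\Psi(RX)$ is the unique element of $\mathcal{X}_0$ in that orbit, which is exactly $X$; hence $\Psi(RX) = X$ as required.

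For the converse ``$\Psi$ exists $\Rightarrow$ transversal'': suppose $X, X' \in \mathcal{X}_0$ with $X' = RX$ for some $R \in SO(3)$. Applying the hypothesis to the pair $(R, X)$ gives $\Psi(RX) = X$, i.e.\ $\Psi(X') = X$. Applying it to the pair $(I, X')$ (the identity rotation, with $X' \in \mathcal{X}_0$) gives $\Psi(X') = X'$. Comparing, $X = X'$, which is precisely the transversal property.

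The main obstacle, such as it is, is being careful about well-definedness in the forward direction: one must justify that the orbit of a point meets $\mathcal{X}_0$ in at most one point (which is immediate from \cref{d:transversal}, since two such points differ by a rotation) and that the value of $\Psi$ outside the relevant orbits can be chosen freely without affecting the stated property. There is no topological or smoothness requirement on $\Psi$ in the statement --- it is merely a function --- so no continuity argument is needed; if one wanted a continuous or measurable selection the argument would be substantially harder, but that is not what is claimed here.
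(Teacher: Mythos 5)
Your proof is correct and follows essentially the same route as the paper's: in the forward direction you define $\Psi$ on each rotation orbit of $\mathcal{X}_0$ by sending everything back to the unique canonical element (well-definedness being exactly the observation that two elements of $\mathcal{X}_0$ in the same orbit are related by a rotation, hence equal by \cref{d:transversal}), and in the converse you compare $\Psi(RX)=X$ with $\Psi(IX')=X'$. Your explicit remark that $\Psi$ may be set arbitrarily off the union of orbits, and that no continuity is required, is a small but welcome clarification that the paper leaves implicit.
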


\begin{proof}
Assume first that $\mathcal{X}_0$ has the transversal property.
Then the function $\Psi$ is obtained by sending each $RX$ for each $X \in\mathcal{X}_0$ back to $X$.
This definition is well posed: if $RX=\bar R \bar X$ where both $X,\bar X\in\mathcal{X}_0$, then $\bar X = (\bar R)^{-1}R X$ and, due to the transversal property, $X=\bar X$.

Assume now that the function $\Psi$ is given and let $X,X'\in\mathcal{X}_0$ such that $X' = RX$ and so $\Phi(X') = \Phi(RX)$.
However, by definition, $\Phi(RX)=X$ and $\Phi(X')=\Phi(IX')=X'$, so that $X=X'$.
\end{proof}

\section{Architecture of $\Psi$ and $\Phi$} \label{s:supp_arch}

\Cref{f:supp_net} contains a schema of the architecture of $\Psi$ and $\Phi$ (both share the same core architecture). It consists of 5 fully connected residual blocks with a kernel size of 1. Empirically, we have observed that using residual blocks, instead of the simpler variant with fully connected layers directly followed by batch normalization and no skip connections, prevents the network from predicting flattened shapes.

\section{Analysis of robustness} \label{s:supp_robust}

In order to test the robustness of \methodname~to the noise present in the input 2D keypoints, we devised the following experiment.

We generated several noisy versions of the Synthetic Up3D dataset by adding 2D Gaussian noise (with variance $\sigma$) to the 2D input and randomly occluded each 2D input point with probability $p_{OCC}$. Experiments were ran for different number of input of keypoints (79, 100, 500, 1000) and the evaluation was always conducted on the representative 79 vertices (\cref{s:datasets}) of S-Up3D-test.

The results of the experiment are depicted in \cref{f:supp_noise}. We have observed improved robustness to noise with higher numbers of used keypoints. At the same time, the performance without noise ($\sigma=0$, $p_{OCC}=0$) is slightly worse for the setup higher number of keypoints ($\geq$ 500 keypoints). We hypothesize that, when more keypoints are used, the performance deteriorates because the optimizer focuses less on minimizing the reprojection losses of the 79 keypoints that are used for the evaluation.

\section{Additional qualitative results} \label{s:supp_qual_more}

In this section we present additional qualitative results. \Cref{f:qual_up3d} contains monocular reconstructions of \methodname~trained on the full set of 6890 SMPL vertices of the S-Up3D dataset. Note that we were unable to run \cite{fragkiadaki2014grouping,torresani2004learning} on this dataset due to scalability issues of the two algorithms.

\section{\methodname~failure modes} \label{s:supp_failure}

\newcommand{\imw}{0.9cm}
\newcommand{\imh}{0.49cm}
\graphicspath{{./images/qual_new_v2/cub_birds_failure/}}
\newcommand{\methtext}[1]{\centering\scriptsize\hspace{0.01cm}\rotatebox{90}{\hspace{0.4cm}#1}\hspace{0.01cm}}
\newcommand{\cropbox}[1]{\noindent\adjustbox{trim={.12\width} {.07\height} {.12\width} {.18\height},clip,width=0.45\linewidth}{\includegraphics[width=\linewidth]{#1}}}
\newcommand{\birdrow}[1]{%
\begin{minipage}[c]{0.49\linewidth}\centering%
    \noindent\includegraphics[height=1.5cm]{#1_im.jpg}\\
    \cropbox{#1_ours_000_090.png}%
    \cropbox{#1_ours_000_135.png}%
\end{minipage}}

\makeatletter
\setlength{\@fptop}{0pt}
\makeatother

\begin{figure*}[t]
    \begin{minipage}[t]{0.49\linewidth}
        \vspace{0.4cm}
        \centering
        \birdrow{pcl0000000104}%
        \birdrow{pcl0000002252}%
        \vspace{\figmargin}
        \caption{A qualitative example of 2D keypoints lifted by our method. Here, the reconstruction fails due to a failure of the HRNet keypoint detector. \label{fig:qual_birds_fail}} 
    \end{minipage}%
    \begin{minipage}[t]{0.49\linewidth}
        \newcommand{\imhh}{2.8cm}
        \newcommand{\titleoffs}{\hspace{0.6cm}}
        \centering \small
        \begin{minipage}[t]{0.49\linewidth}\centering
        \titleoffs\textit{79 keypoints}\\ \vspace{0.1cm}
        \includegraphics[height=\imhh]{./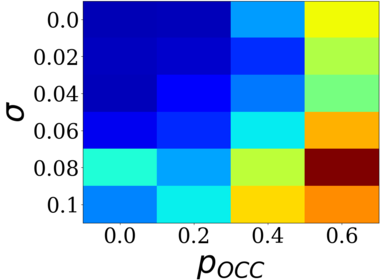}
        \end{minipage}%
        \begin{minipage}[t]{0.49\linewidth}\centering
        \titleoffs\textit{100 keypoints}\\ \vspace{0.1cm}
        \includegraphics[height=\imhh]{./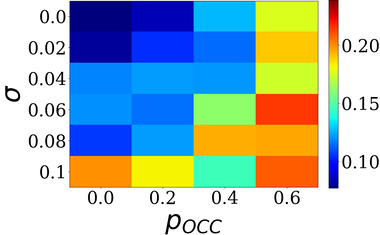}
        \end{minipage}\\ \vspace{0.3cm}
        \begin{minipage}[t]{0.49\linewidth}\centering
        \titleoffs\textit{500 keypoints}\\ \vspace{0.1cm}
        \includegraphics[height=\imhh]{./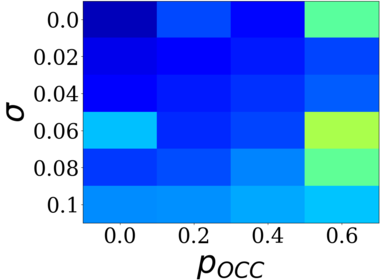}
        \end{minipage}%
        \begin{minipage}[t]{0.49\linewidth}\centering
        \titleoffs\textit{1000 keypoints}\\ \vspace{0.1cm}
        \includegraphics[height=\imhh]{./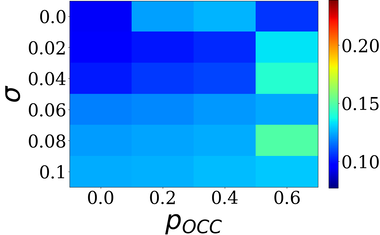}
        \end{minipage}
        \vspace{0.05cm}
        \caption{MPJPE on Up3D of \methodname~depending on various levels of Gaussian noise added to 2D inputs ($\sigma$-vertical axis) and the probability of occluding an input 2D point ($p_{OCC}$-horizontal axis) for different numbers of training keypoints (left to right, top to bottom: 79, 100, 500, 1000).}
        \vspace{-0.4cm}\label{f:supp_noise}
    \end{minipage} 
    \vspace{-0.3cm}
\end{figure*}
\let\imw\undefined
\let\imh\undefined
\let\methtext\undefined
\let\cropbox\undefined
\let\birdrow\undefined
\null

The main sources of failures of our method are:
(1) Failures of the 2D keypoint detector \cite{sun2019deep};
(2) Reconstructing ``outlier'' test 2D poses not seen in training (mainly on Human3.6m);
(3) Reconstructing strongly ambiguous 2D poses (in a frontal image of a sitting human, the knee angle cannot be recovered uniquely). The failure mode (1) is depicted in \cref{fig:qual_birds_fail}.
}
\end{document}